\documentclass{article}

    \PassOptionsToPackage{round, sectionbib}{natbib}

    \usepackage[preprint]{neurips_2019}

\usepackage[utf8]{inputenc} \usepackage[T1]{fontenc}    \usepackage[hidelinks]{hyperref}       \usepackage{url}            \usepackage{booktabs}       \usepackage{amsfonts}       \usepackage{nicefrac}       \usepackage{microtype}      
\usepackage{amsmath, amsthm}
\usepackage{graphicx}
\usepackage{wrapfig}
\usepackage{subcaption}

\graphicspath{{./figures/}}

\newcommand{\II}{\mathcal{I}}
\newcommand{\RR}{\mathbb{R}}
\newcommand{\Ss}{\mathcal{S}}
\newcommand{\Aa}{\mathcal{A}}
\newcommand{\Oo}{\mathcal{O}}
\newcommand{\EE}{\mathcal{E}}

\newcommand{\thetaold}{{\theta_\textnormal{old}}}

\newtheorem{lemma}{Lemma}
\newtheorem*{remark}{Remark}

\DeclareMathOperator*{\argmax}{arg\,max}

\title{Attentional Policies for Cross-Context Multi-Agent Reinforcement Learning}

\author{  Matthew A.~Wright\thanks{Corresponding author.}~~and~Roberto Horowitz
  \\
    University of California\\
  Berkeley, CA, USA \\
  \texttt{\{mwright,horowitz\}@berkeley.edu} \\
                                          }

\begin{document}

\maketitle

\begin{abstract}
Many potential applications of reinforcement learning in the real world involve interacting with other agents whose numbers vary over time.
We propose new neural policy architectures for these multi-agent problems.
In contrast to other methods of training an individual, discrete policy for each agent and then enforcing cooperation through some additional inter-policy mechanism, we follow the spirit of recent work on the power of relational inductive biases in deep networks by learning multi-agent relationships at the policy level via an attentional architecture.
In our method, all agents share the same policy, but independently apply it in their own context to aggregate the other agents' state information when selecting their next action.
The structure of our architectures allow them to be applied on environments with varying numbers of agents.
We demonstrate our architecture on a benchmark multi-agent autonomous vehicle coordination problem, obtaining superior results to a full-knowledge, fully-centralized reference solution, and significantly outperforming it when scaling to large numbers of agents.
\end{abstract}

\section{Introduction}
Deep reinforcement learning (RL) has been at the core of many breakthroughs in AI and controls domains in recent years.
Examples include robotic locomotion \citep{lillicrap_continuous_2016} and strategy games like Go at which computers had previously been noncompetitive \citep{silver_mastering_2016,silver_mastering_2017}.

The graduation of deep RL systems from closed environments to the real world will introduce new complexities.
In the real world, a system will need to learn to interact not just with the environment, but with other (naturally or artificially) intelligent agents as well: they are \emph{multi-agent systems} \citep{stone_multiagent_1997,hernandez-leal_is_2018}.

Many new algorithms for deep RL have come from the statement of desired behaviors in policy learning, and then constructing new RL objective functions to encourage those desiridata.
Such a pattern has been apparent in the multi-agent setting as well.
Several recent papers on multi-agent-specific RL training algorithms propose value function estimators that can relate multiple agents' states and actions to observed scalar returns.
For example, \citet{sunehag_value-decomposition_2017} and \citet{rashid_qmix_2018} consider how to decompose a joint action-value function (Q function) into per-agent ones that, after learning, can guide each agent independently.
Other works (\citet{foerster_counterfactual_2017,lowe_multi-agent_2017,mao_modelling_2019}, etc.) consider a joint value function as playing the role of a critic in a multi-agent generalization of RL's actor-critic paradigm.
The per-agent policy networks (i.e., the actors) are trained with \emph{centralized} critic networks that aggregate information from all actors, then provide learning signals to move the actors to more cooperative behaviors.

In this paper, we take a different approach.
Taking a page from recent work in recognizing the utility of \emph{relational inductive biases} \citep{battaglia_relational_2018} when crafting the interior architectures of a deep learning system, we redesign the policy networks such that agents can learn how to interact with other agents at the \emph{policy} level.
In particular, we apply neural attention \citep{bahdanau_neural_2015,vaswani_attention_2017} as a fundamental building block in our policy networks.
We argue that this framework has appealing properties: among other benefits, it allows us to apply classic ``single-agent'' RL algorithms to the multi-agent problem with few modifications.
The same architectures can be used for critic networks as well, which provides an appealing return to the classic, ``single-agent'' world, where actor and critic networks are often just architectural copies (c.f. the aforementioned methods where the critic networks are more complex than the policy networks).

Finally, the attentional construction lets us flexibly apply the same network to different contexts, both across agents and across different environmental situations from the perspective of each agent (i.e., we can both evaluate and train the same policy on multi-agent environments with different numbers of agents).
We call the architecture ``\emph{cross-context}'' to emphasize this flexibility.

The remainder of this paper is organized as follows.
In section \ref{sec:definitions}, we briefly review the mathematical framework of RL in general and multi-agent RL.
Section \ref{sec:relatedwork} reviews recent work in multi-agent RL and the use of ``relational inductive biases'' (of which attention is a particular instance) in learning.
Section \ref{sec:problem} reviews a benchmark multi-agent RL problem in coordinated autonomous vehicle control \citep{vinitsky_benchmarks_2018} that we use as a framing problem.
We then proceed to implementation: section \ref{sec:architecture} discusses our attentional policy networks for RL, section \ref{sec:appo} discusses how these networks enable straightforward application of single-agent RL methods to multi-agent problems, and section \ref{sec:implementation} gives various implementation details.
Section \ref{sec:results} presents our results, and finally, section \ref{sec:conclusion} concludes with discussion on how the attentional policies enable flexible and decentralized multi-agent RL.

\section{Definitions and Preliminaries}
\label{sec:definitions}
\subsection{The general RL setting}
\label{subsec:rl}
RL is typically presented in the mathematical framework of finite-horizon, discounted Markov decision processes (MDPs) \citep{duan_benchmarking_2016}.
These MDPs are defined by a tuple $(\Ss, \Aa, P, r, \rho_0, \gamma, T)$, where $\Ss$ is the state space, $\Aa$ is the action space,
$P$ is a stochastic transition function from $\Ss \times \Aa$ to $\Ss$ (i.e., it defines a conditional probability distribution
$P(s_{t+1} | s_t, a_t)$ for 
$s_t, s_{t+1} \in \Ss, a_t \in \Aa$), 
$r: \Ss \times \Aa \to \RR$ is the reward function, $\rho_0$ is a probability distribution on initial states $s_0$, $\gamma \in (0, 1]$ is a reward discounting factor, and $T$ is the time horizon.
The goal is to maximize the cumulative discounted reward $\sum_{t=0}^T \gamma^t r(s_t, a_t)$ where $s_t$ and $a_t$ are the state and action, respectively, at time $t$.

In the RL problem, the probability distributions and/or the reward function are unknown.
The objective is to learn a \emph{policy} $\pi$ that defines a conditional probability distribution of actions given states, $a_t \sim \pi(a_t | s_t)$, such that the policy approximately maximizes the expectation of the discounted reward.

Typically we define our space of candidate policies as parameterized by some parameter vector $\theta$ (in modern deep RL, $\theta$ is the weights in a deep neural network).
That is, the full expression for the policy is $\pi(a_t | s_t ; \theta)$ (also written with $\theta$ as a subscript, $\pi_\theta(a_t | s_t)$).
The RL objective is then to find the optimal parameter vector $\theta^*$, defined as
\begin{equation}
    \theta^* = \argmax_\theta E_\tau \left[ \sum_{t=0}^T \gamma^t r(s_t, a_t) \right]
    \label{eq:thetastar}
\end{equation}
where $\tau = (s_0, a_0, s_1, a_1, \dots)$ is a shorthand for the entire trajectory.
The policy $\pi_\theta(a_t | s_t)$ appears in \eqref{eq:thetastar} as part of the distribution over which the expectation is being taken.

Solving an RL problem requires devising a procedure for moving around in $\theta$-space to gather information about $P$ and $r$ (in the form of experienced samples) and using the information about the landscape of $P$ and $r$ to move towards $\theta^*$.
The abstract entity that draws actions $a_t$ from $\pi_\theta(a_t | s_t)$, executes them, and observes the resulting next state $s_{t+1} \sim P(s_{t+1} | s_t, a_t)$ and reward value $r(s_t | a_t)$ (for the particular $s_t,a_t$) is typically called the ``agent.''

\subsection{Multi-agent RL}
\label{sec:multiagentrl}
So far, we have just described the background to traditional, non-multi-agent RL.
Multi-agent RL, as its name suggests, adds complications by having multiple agents.
Let $\II_t$ denote the set of agents that are present in the environment at time $t$.
We will say that each agent has its own observation space $\Oo^i$, which reflects a (partial) observation of the environment from that agent's perspective,
and its own action space $\Aa^i$ and policy $\pi^i$ which defines a per-agent conditional probability distribution on actions given the state, $a_t^i \sim \pi^i(a_t^i | s_t)$, with $a_t^i \in \Aa^i$.

In general, agent $i$ need not only have information (sensor readings, etc.) \emph{physically} local to itself.
In a multi-agent setting where agents are meant to interact, it reasonable to assume that each agent will have information about the others.
This information may be obtained by agent $i$ from, for example, inter-agent communication or visual observation of other agents.

In the particular implementation presented in this work, the information $i$ receives from $j$, if any, is $o_t^j$, i.e., $j$'s observation.
A more general case (e.g., where $i$'s information about itself and about other agents are of different dimensionalities) is possible, but in this paper we will say that the local and received observations live in the same space to simplify later notation.

We also define a directed graph with edge set $\EE$ that encodes the inter-agent relationships.
The edge $(i,j) \in \EE$ if $i$ gets information from $j$.
Each edge also has a particular class $c$, $c \in \{1,\dots,C\}$, to encode that agents relate to each other in meaningfully different ways.

\section{Related Work}
\label{sec:relatedwork}
Multi-agent RL is said to be much more difficult than conventional, single-agent, RL.
In addition to the typical obstacles in single-agent RL (like temporal credit assignment due to sparse rewards and navigating the exploration-exploitation tradeoff), multi-agent RL adds complications such as an intrinsically higher dimensionality, per-agent credit assignment, and (from the perspective of each individual agent) environmental nonstationarity during the learning process (i.e., if multiple interacting agents are all learning at the same time, then one agent's knowledge about how others react to their actions quickly becomes outdated) \citep{hernandez-leal_is_2018}.

\subsection{Learning Multi-agent Cooperation}
Several authors have proposed new RL training regimes to encourage the learning of cooperative behavior.
In general, these approaches retain the idea of training individual policies per agent, but adjust the training goal to include context-specific multi-agent information.

\citet{sunehag_value-decomposition_2017} and \citet{rashid_qmix_2018} consider the multi-agent Q-learning problem, where the joint (global) action-value function $Q(s_t, a_t)$ is well-defined, and study structural decompositions of it into per-agent action-value functions $Q^i(s_t^i, a_t^i)$ such that each agent's optimum $a_t^i$ should be the same as the action it would have taken if the joint Q-function was maximized.

\citet{foerster_counterfactual_2017,lowe_multi-agent_2017,iqbal_actor-attention-critic_2018,mao_modelling_2019}, among others, consider multi-agent RL where individual agents operate independently, but during actor-critic-style training, a centralized critic takes in information from all other agents.
During policy execution (where the critic is not present), each agent operates independently, but has learned the behaviors favored by the centralized critic.
\citet{iqbal_actor-attention-critic_2018} and \citet{mao_modelling_2019}, in particular, propose centralized critics whose value function estimates make use of an attention module.
Our proposed method is somewhat similar to the centralized critic methods, in that we also make use of a value function baseline that aggregates information (although our implementation is different from the aforementioned references), but we also permit the agents to explicitly attend to each other during execution.
Of particular importance is that this means that the policies can adapt to situations of \emph{varying numbers} of other agents.

Such a distinction is similar to the distinction between ``self-attention,'' popularized by \citet{vaswani_attention_2017} and the traditional attention of \citet{bahdanau_neural_2015} (sometimes called ``encoder-decoder'' or just ``decoder'' attention).
Our policy network structures can be thought of implementing self-attention,\footnote[1]{The name is somewhat confusing since in our ``self''-attention, agents are in fact attending to other agents. ``Self-attention'' is better thought of as being distinct from encoder-decoder attention in that in self-attention, all entities attend on each other, whereas in encoder-decoder attention, a fixed decoder attends on multiple encoded attendees, with the attentional decoding only happening in one direction.} whereas a centralized critic uses the classic encoder-decoder attention.
To the best of our knowledge, this work represents the first proposal of attention mechanisms in the policy network itself.

\subsection{Multi-agent Communication}
In this work, we take the information that agent $i$ has about agent $j$ as given.
The receiving/observing agent $i$ then learns to process this information when generating its policy distribution.
This is in contrast to a body of work (\citet{foerster_learning_2016,sukhbaatar_learning_2016,lazaridou_multi-agent_2017,mordatch_emergence_2018}, etc.) in which a communication policy is explicitly learned.
In an application where $i$'s information about $j$ is \emph{communicated} by $j$ rather than \emph{observed} by $i$, and communication has some associated bandwidth constraint or cost, it makes sense to add a communication policy in the learning objective.

As an example, \citet{jiang_learning_2018} proposed an attentional module to enable agents to decide \emph{when} to communicate, and following \citet{sukhbaatar_learning_2016} and \citet{peng_multiagent_2017} a global LSTM coordinator is used to aggregate and disseminate this information back to the agents.
They argue that enabling the learning of \emph{selective} communication improves performance, by removing the need for receivers to filter out less-useful information.
An extension of our framework where the attended-on information is emitted by a learned communication module as in \cite{jiang_learning_2018}, but the processing is done in a decentralized agent-wise manner like in this work, is an interesting avenue for future work.

\subsection{Relational Inductive Biases in Learning}

Recent work in machine learning theory (\citet{battaglia_relational_2018} present a broad review) has argued for the importance of so-called \emph{Relational Inductive Biases} in effective learning.
A relational inductive bias is an inductive bias (defined by \citet{battaglia_relational_2018}, citing \citet{mitchell_need_1980}, as a bias or prior that ``allows a learning algorithm to prioritize one solution (or interpretation) over another, independent of the observed data'') that encodes prior knowledge about the existence of discrete entities, and the \emph{relations} among those entities.
A convolutional neural network layer, for example, encodes a relational inductive bias that pixels are the entities of interest, and that \emph{locality} of statistically correlated pixels is of prime importance \citep{battaglia_relational_2018}.

Neural attention \citep{bahdanau_neural_2015,luong_effective_2015,vaswani_attention_2017} has been characterized as a form of relational inductive bias, where the attended-on elements are the entities of interest, and the relations are quantified via the ``attention weights,'' the output of a learned attention module \citep{battaglia_relational_2018}.
Giving each agent a relational inductive prior to the other agents it needs to coordinate with, at the policy level, motivates our application.

\section{Our Framing Problem: Open-Network Autonomous Vehicle Coordination}
\label{sec:problem}
In this work, we consider a benchmark multi-agent RL problem introduced by \cite{vinitsky_benchmarks_2018} (shown in Figure \ref{fig:merge}).
The work proposed several multi-agent reinforcement learning problems based on mixed-autonomy traffic (road traffic with mixtures of autonomous and human-driven vehicles).
We will consider the ``Merge'' problem.
In this problem, two single-lane roads merge into one.
At the merge, the vehicles will compete for space, inducing congestion and a high social cost.
The RL problem is to take control of some subset of the vehicles and dissipate this congestion.

The non-controlled vehicles are modeled as being driven by humans, and their accelerations are given by a behavioral model called the \emph{Intelligent Driver Model} \citep{treiber_congested_2000}.

\begin{wrapfigure}[10]{R}{.6\textwidth}
    \centering
    \includegraphics[width=.6\textwidth]{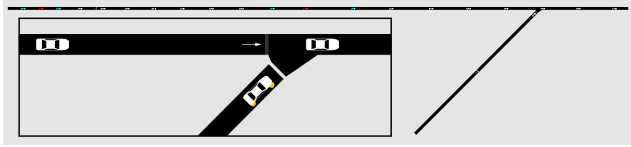}
    \caption{``Merge'' benchmark road network, with zoom-in showing simulated merging vehicles.}
    \label{fig:merge}
\end{wrapfigure}

In \cite{vinitsky_benchmarks_2018}, the canonical solution uses a centralized single-agent approach rather than a multi-agent approach.
There, a central controller receives all observations, stacks them into one vector, and computes all actions.
However, the number of controlled vehicles on the network will change as they enter and exit, so to use a traditional single-agent MLP (successive fully-connected neural network layers) architecture, a fixed maximum number $N$ of vehicles to control must be defined, and the network-wide observation vector is either truncated or zero-padded as needed.
On the action end, if $|\II_t| < N$, extra actions are discarded, and when $|\II_t| > N$, some are left uncontrolled.

\label{sec:why}

This style of centralized controller has some drawbacks beyond its potential lack of realism.
Controlling at most $N$ vehicles effectively throws away extra information when more than $N$ vehicles are present.
The padding and truncation likely also makes the learning problem harder because the RL agent is expected to learn by itself to not assign credit to the ignored actions (without knowledge that they have been ignored), making the credit assignment problem even more difficult.
One solution is to train different policies for different numbers of agents and select between them as the situation changes, but training many policies would, among other issues, vastly increase the RL sample requirements.
In contrast, our proposed method seeks to be cross-trainable by allowing valid backpropagation for any number of agents.

Autonomous vehicle coordinative algorithms like Cooperative Adaptive Cruise Control methods call for decentralized controllers \citep{dey_review_2016,wang_review_2018}.
A method to decentralize the training and execution of RL controllers is a critical step towards their deployment to real transportation networks.
Our method has an advantage in this area in that, since it is valid for any number of agents, it can by construction be executed by a single agent in a fully decentralized manner.

\section{Attentional Architectures for RL}
\label{sec:architecture}

We use a (self-) attention layer in our policy and value networks.
The layer takes in a set of input vectors $\{h_{in}^i \in \RR^n, i \in \II_t\}$ and outputs a set of vectors $\{h_{out}^i \in \RR^m,  i \in \II_t\}$.
The layer calculation is the scaled-dot-product attention calculation \citep{vaswani_attention_2017} with a modified version of \citet{shaw_self-attention_2018}'s ``relative positional embeddings'' to differentiate different types of edges, which in our notation is
\begin{subequations}
    \label{eq:attn}
    \begin{equation}
        h^i_{out} = \sum_{j \in \II_t: \; (i,j) \in \EE} \alpha^{ij} \left( h_{in}^j W_V + a_V^{c(i,j)} \right) \label{eq:attn1}
    \end{equation}
where $c(i,j)$ means the class of the edge $(i,j)$, $\alpha^{ij}$ is the attention weight of $i$ attending on $j$, given by
    \begin{equation}
        \alpha^{ij} = \frac{\exp(e^{ij})}{\sum_{k \in \II_t: \; (i,k) \in \EE} \exp(e^{ij})} 
        \qquad \quad e^{ij} = \frac{h_{in}^i W_Q (h_{in}^j W_K + a_K^{c(i,j)})^T}{\sqrt{m}} \label{eq:attn2}
    \end{equation}
and where $W_Q, W_K, W_V \in \RR^{n \times m}$ and $a_V^{c(i,j)}, a_K^{c(i,j)} \in \RR^m$ for $c \in \{1, \dots, C\}$ are learned parameters.
\end{subequations}
The calculations in \eqref{eq:attn} form a straightforward self-attention calculation in the spirit of \citet{vaswani_attention_2017}, with the adoption of \citet{shaw_self-attention_2018}'s proposal to have unique bias vectors $a_V$ and $a_K$ for each of the $C$ different ways that agents can relate to each other.
We also make use of \emph{multi-head} attention \citep{vaswani_attention_2017}, where the calculations in \eqref{eq:attn} are performed multiple times in parallel, with independent $W$'s and $a$'s, and then each head's output vectors are concatenated.

In this work, our policy network is as follows.
The per-agent observations $o^i_t \in \RR^n$, $i \in \II_t$ are stacked into an $|\II_t| \times n$ tensor.
This tensor goes into the attention layer described in \eqref{eq:attn}.
We use 4 attention heads of $m=16$ units each.
Next is a fully-connected hidden layer with 64 units (each of the $|\II_t|$ attention layer outputs pass through this layer identically and in parallel).
Both the attentional and fully-connected sublayers are followed by a ReLu nonlinearity and a layer-normalization operation \citep{ba_layer_2016} (with learned scale and location parameters), in that order.
This structure of an attentional sublayer followed by a shared-over-agents fully-connected sublayer is inspired by the Transformer architecture of \cite{vaswani_attention_2017}, though we use only one such layer and omit residual connections.

The output of the above layers then goes into the output layer, whose output parameterizes the stochastic policy.
In this work, our stochastic policy is a per-agent Gaussian distribution with mean and log-variance computed by the same fully-connected layer for each agent.
To reiterate, the same layers are used for all vehicles $i \in \II_t$, and can be computed fully in parallel.
Batching multiple together can be done by padding, and ensuring the pad vehicles do not contribute to the sum in \eqref{eq:attn1}.

\section{Attentional Multi-Agent Proximal Policy Optimization}
\label{sec:appo}
One key boon of using self-attentional architectures in the policy network is that classic ``single-agent'' RL training algorithms can be used with little modification (this is in contrast to, e.g., \citet{sunehag_value-decomposition_2017,foerster_counterfactual_2017,lowe_multi-agent_2017,iqbal_actor-attention-critic_2018,rashid_qmix_2018,mao_modelling_2019}, where restricting learning about inter-agent relationships to \emph{outside} of the policy network requires multi-agent-specific modifications to RL).
This section outlines how one can use Proximal Policy Optimization (PPO) \citep{schulman_proximal_2017}, a popular and relatively simple RL algorithm, to train our attentional multi-agent policy.

\paragraph{Vanilla Proximal Policy Optimization}

A general PPO objective function of a parameter vector $\theta$ at timestep $t$ is of the form \citep{schulman_proximal_2017}
\begin{subequations}
    \label{eq:ppo}
\begin{equation}
    L^{PPO}_t(\theta) = E_\pi \left[L^{CLIP}_t(\theta) - c_1 L^{VF}_t(\theta) + c_2 S\left(\pi_\theta \right) - \beta D_{KL}(\pi_\thetaold || \pi_\theta ) \right] \label{eq:ppo_pt1}
\end{equation}
with the ``clipped'' surrogate advantage objective function
\begin{equation}
    L^{CLIP}_t(\theta) = \min \left( 
        \frac{\pi_\theta}{\pi_\thetaold} \cdot A_t, \;
        \textnormal{clip}
            \left( \frac{\pi_\theta}{\pi_\thetaold},
            1-\epsilon,
            1 + \epsilon \right) \cdot A_t \right) \label{eq:ppo_pt2}
\end{equation}
\end{subequations}
where $\pi_\thetaold$ is the policy distribution under some fixed parameter vector $\thetaold$ (taken to be the distribution at the beginning of a PPO iteration), and $A_t$ is the advantage at time $t$.
The policy $\pi_\theta$ is still a conditional probability distribution for the random variable $a_t | s_t$, but we omit this argument in \eqref{eq:ppo} to reduce notational clutter.
Also, $L_t^{VF}$ is the squared error of an estimate of the value function $V(s_t)$, $S(\pi_\theta)$ the entropy of the distribution $\pi_\theta$, $D_{KL}(\cdot || \cdot )$ the KL divergence, $c_1$ and $c_2$ are scaling constants, and $\beta$ is a scalar whose value is updated adaptively during the training process.

The formulation of the PPO objective is to encourage the distribution $\pi_\theta(a_t | s_t)$ to move towards higher-advantage actions, but not move $\pi_\theta(a_t | s_t)$ \emph{too far} from $\pi_\thetaold(a_t | s_t)$.
Some implementations of PPO do not use the KL penalty ($\beta = 0$), some do not use the clipping of the objective ($\epsilon \to \infty$), and some use both to enforce this ``distance'' bound.

In practice, the expectation in \eqref{eq:ppo_pt1} is approximated via a sample mean of $(s_t, a_t, r_t)$ tuples.
That is, every term in \eqref{eq:ppo} is evaluated pointwise for a batch of samples and the gradient with respect to $\theta$ is computed against the mean of the pointwise $L_t^{PPO}(s_t, a_t, r_t)$.

\paragraph{The Distribution for the Attentional PPO Objective}

The traditional PPO algorithm assumes only a single agent.
How do we apply the objective $\eqref{eq:ppo}$ when the domains of $s_t$ and $a_t$ vary over timesteps with the number of agents?

The key insight that allows us to apply PPO to the attentional multi-agent architecture is that every term is a function of statistics of the \emph{policy} $\pi_\theta$ rather than any particular \emph{agent}.
The form of the joint distribution $\pi_\theta$ across agents varies over timesteps, but we reduce the information \emph{across agents} into per-timestep scalar statistics of $\pi_\theta$ (entropy and KL divergence in \eqref{eq:ppo_pt1}, likelihood ratios in \eqref{eq:ppo_pt2}) before actually taking the expectation (i.e., before averaging over timesteps).

We formalize this result in the following elementary lemma:
\begin{lemma}
    For the attentional policy network, the conditional distribution of the actions given the state for timestep $t$ is given by
    \begin{equation}
        \pi_\theta(a_t | s_t) = \prod_{i \in \II_t} \pi_\theta(a_t^i | \{ o_t^j: (i,j) \in \EE \}).
    \end{equation}    
\end{lemma}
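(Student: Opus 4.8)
The plan is to read the lemma as two separate structural facts about the forward pass of the network and then combine them. \emph{Factorization}: conditioned on the full state $s_t$, the joint action distribution splits as a product of per-agent distributions. \emph{Locality}: each per-agent factor depends on $s_t$ only through the observations $\{o_t^j : (i,j) \in \EE\}$ of the agents that $i$ attends to. Granting both facts, the lemma follows by substituting the local conditioning set into each factor, so the work is entirely in verifying these two facts from the architecture of Section~\ref{sec:architecture}.

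First I would establish the factorization. Each observation $o_t^i \in \Oo^i$ is a deterministic function of the state $s_t$, so conditioning on $s_t$ fixes every network input $h_{in}^i = o_t^i$. Tracing the forward pass, the network then maps these inputs deterministically to per-agent Gaussian parameters: the attention layer \eqref{eq:attn} produces $h_{out}^i$, the shared fully-connected and output layers act identically and in parallel on each $h_{out}^i$, and the result is a mean $\mu^i$ and a log-variance, i.e.\ the parameters of a Gaussian $\NN(\mu^i, (\sigma^i)^2)$. The action $a_t^i$ is drawn from this Gaussian, and crucially the sampling noise is independent across agents. Hence, conditioned on $s_t$, the $a_t^i$ are mutually independent with the stated marginals, giving $\pi_\theta(a_t | s_t) = \prod_{i \in \II_t} \pi_\theta(a_t^i | s_t)$.

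Next I would establish locality by tracing the dependence of $h_{out}^i$ on the inputs through \eqref{eq:attn}. The sum in \eqref{eq:attn1} ranges only over $j$ with $(i,j) \in \EE$, and the softmax normalizer in \eqref{eq:attn2} ranges over the same set, so $h_{out}^i$ is a function of $\{h_{in}^j : (i,j) \in \EE\}$ together with the query built from $h_{in}^i$. Since the downstream layers are applied per-agent, the parameters $(\mu^i, \sigma^i)$ — and therefore the factor $\pi_\theta(a_t^i | s_t)$ — depend on $s_t$ only through this restricted set of observations. Replacing the conditioning on $s_t$ by conditioning on $\{o_t^j : (i,j) \in \EE\}$ in each factor and combining with the factorization of the previous paragraph yields the claim.

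The one point that needs care is precisely this locality step: the query in \eqref{eq:attn2} is formed from $h_{in}^i = o_t^i$, so $h_{out}^i$ genuinely depends on $i$'s own observation as well as its attended neighbors. For the neighborhood notation $\{o_t^j : (i,j) \in \EE\}$ to capture this, I would make explicit the self-attention convention that $(i,i) \in \EE$ (equivalently, read the conditioning set as including $o_t^i$). With that convention fixed, the remainder is routine bookkeeping of which inputs flow into which outputs; there is no analytic difficulty, consistent with the authors' description of the result as elementary.
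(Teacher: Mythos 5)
Your proof follows essentially the same route as the paper's: the same two-step decomposition into the product factorization $\pi_\theta(a_t \mid s_t) = \prod_{i \in \II_t} \pi_\theta(a_t^i \mid s_t)$ over the product action space, followed by the conditional-independence (locality) observation that each factor depends on $s_t$ only through $\{o_t^j : (i,j) \in \EE\}$ by construction of the attention layer. Your version simply fills in more of the architectural bookkeeping the paper leaves implicit, and your note that the self-edge convention $(i,i) \in \EE$ is needed for the conditioning set to capture the query's dependence on $o_t^i$ is a correct and worthwhile clarification (consistent with the paper's use of a distinct edge class for the case $i = j$).
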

\begin{proof}
    We have defined that $\Aa^i$ is the action space for agent $i$, and defined the joint action space as the product space $\Aa = \prod_i \Aa^i$.
    Then, we can define the product measure on $\Aa$ as simply the product of the measures on the component action spaces,
    \begin{equation}
        \pi_\theta(a_t | s_t) = \prod_{i \in \II_t} \pi_\theta(a_t^i | s_t). \label{eq:proof1}
    \end{equation}
    Then, we note that by the construction of the attentional network, we have that $a_t^i$ is conditionally independent of $\{o_t^j: (i,j) \notin \EE\}$ given $\{o_t^j: (i,j) \in \EE\}$,
\begin{equation}
    \pi_\theta(a_t^i | s_t) = \pi_\theta(a_t^i | \{ o_t^j: (i,j) \in \EE \}). \label{eq:proof2}
\end{equation}
    Combining \eqref{eq:proof1} and \eqref{eq:proof2}, we immediately have the lemma.
\end{proof}
\begin{remark}
    Note that the above construction only makes sense because $\theta$ and $s_t$ are held fixed for all $i$.
    This means that the $a_t^i$ are exchangeable in the de Finetti sense.
    In a non-attentional network where the relational inductive bias does not encode a permutation invariance (e.g., if an LSTM is used to sequentially encode the observations $i$ attends to), this may not hold.
\end{remark}

\paragraph{Attentional Value Function Baseline}
In this work, we estimate the value function $V(s_t)$ using a neural network with identical architecture to the policy network described above, but with an agent-wise max pooling operation at the end whose output goes through a fully-connected layer to produce a scalar value.
This can be seen as a fully self-attentional critic rather than the encoder-decoder attentional critics used in, e.g., \citet{iqbal_actor-attention-critic_2018,mao_modelling_2019}.
This value function estimator is used in the Generalized Advantage Estimator \citep{schulman_high-dimensional_2016} to estimate $A_t$ in \eqref{eq:ppo_pt1}.

\section{Implementation Details}
\label{sec:implementation}
The ``merge'' baseline described above is implemented in the framework \emph{Flow} \citep{wu_flow_2017}, which is a Python codebase built on the widely-used microscopic vehicle traffic simulator SUMO \citep{krajzewicz_recent_2012} that adapts SUMO to the widely-used RL problem standard ``\texttt{env}'' developed in OpenAI \emph{Gym} \citep{brockman_openai_2016}.
We implemented our neural network architecture in \emph{Ray} \citep{moritz_ray_2018}, specifically its RLlib framework \citep{liang_rllib_2018}.
In particular, we modified RLlib's implementation of PPO to be compatible with the network architecture we described above.
All PPO hyperparameters were left as the same as in \citet{vinitsky_benchmarks_2018}, with the exception that we update our policy every 20 rollouts instead of every 50.

We also used Ray to produce a baseline solution similar to \citet{vinitsky_benchmarks_2018}'s fully-centralized single-agent approach, using MLP policy and value networks with the padding and truncation discussed in section \ref{sec:why}.
For our single-agent reference, we use a two-hidden-layer networks with 64 units in each fully-connected layer and a tanh nonlinearity in between.
This 64x64 architecture serves as a comparison to the attentional architecture that has the same number of hidden units.

\section{Experimental Results}
\label{sec:results}

\cite{vinitsky_benchmarks_2018} proposed several different configurations of the ``Merge'' problem, varying in the penetration rate of autonomous vehicles and the maximum number of vehicles that are allowed to be controlled.
At the low end, ``Merge 0'' requires the control of at most 5 vehicles, and on the high end, ``Merge 2'' requires the control of up to 17 vehicles.
We report the results of several experiments in Figure \ref{fig:results} and discuss them in detail below.

\begin{figure}[t]
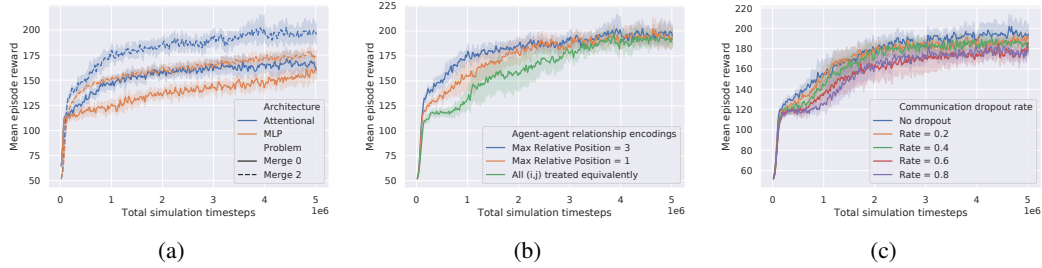

    \centering
    \begin{subfigure}{.32\textwidth}
        \includegraphics[width=\textwidth]{merge0_merge2.pdf}
        \caption{}
        \label{fig:base}
    \end{subfigure}
    ~
    \begin{subfigure}{.32\textwidth}
        \includegraphics[width=\textwidth]{relpos.pdf}
        \caption{}
        \label{fig:relpos}
    \end{subfigure}
    ~
    \begin{subfigure}{.32\textwidth}
        \includegraphics[width=\textwidth]{dropout.pdf}
        \caption{}
        \label{fig:dropout}
    \end{subfigure}
    \caption{Learning curves for various experiments. All curves show the mean and 95\% confidence interval of mean episode reward.
    Figure (a) compares the performance of the attentional policy to a standard MLP policy with dynamic padding and truncation to a fixed size.
    Figure (b) studies the performance for different levels of explicitly-encoded relational inductive bias.
    Figure (c) examines the robustness of the attentional policies to lossy communication by varying the degree to which communicated observations are randomly dropped.
    All curves are for sample sizes of 10 runs with different random seeds.}
    \label{fig:results}
\end{figure}

Figure \ref{fig:results}(a) shows learning curves for PPO on the ``Merge 0'' and ``Merge 2'' benchmarks, for both our attentional architecture and the reference MLP architecture with padding and truncation described in section \ref{sec:problem}.
On both problems, we obtain superior performance to the MLP architecture.

\paragraph{Importance of Relational Inductive Biases} Some of the superior performance of the decentralized controller comes from the power of the relational inductive biases encoded in the attention module.
To study this, in Figure \ref{fig:results}(b) we experiment on Merge 2 with varying numbers of relative position encodings $C$.
The base case uses $C=7$, giving unique $c(i,j)$'s for the self-case of $i=j$ and the two subsequent controlled vehicles upstream and downstream; all other further-upstream vehicles share the same $c(i,j)$ relation, and all other further-downstream another $c(i,j)$.
This configuration is ``Max Relative Position~=~3'' in figure \ref{fig:results}(b).
The line labeled ``Max Relative Position~=~1'' uses only $C=3$, where all downstream and all downstream vehicles to the attending agent are considered equivalently.
Finally, the line labeled ``All (i,j) treated equivalently'' means that $C=1$, and each agent treats both itself and all other agents equivalently.
We see that all configurations are able to eventually attain around the same maximum reward (the fact that we use multi-headed attention means that even for the all-agents-equivalent case, the policy can learn to use different heads to attend to different $o^j_t$'s in different ways), but more informative relational inductive biases give increases in sample efficiency and less variance in learning.

\paragraph{Robustness to Varying Information Availability} Finally, in figure \ref{fig:results}(c) we test the attentional policies' robustness by introducing randomly lossy communication.
On Merge 2 with the ``Max Relative Position = 1'' configuration, on every timestep we randomly delete edges $(i,j)$ for $i\neq j$ with varying probabilities.
This means that each agent $i$ will randomly not be able to attend to information from other agents.
We find that while there may be somewhat of a performance decrease as the dropout rate increases, it is minor (the difference between the end-of-training mean reward for the 0.8-rate and both the 0.2-rate and no-dropout cases is statistically significant under a two-sample unequal variances $t$-test ($p \approx 0.01$ for both), but none of the other pairwise differences are).
This suggests that even for highly-varying information environments, the architecture can generalize.

\section{Conclusion: Attention's Real-World Applicability}
\label{sec:conclusion}
It is worth noting a few details that make the attentional architecture appealing for multi-agent RL problems.
Of key importance is that each agent's actions are computed fully in parallel.
This means that each agent can actually compute its action locally, independent of the other agents, using only its knowledge of its and whatever other agents' states it has available. 
Computing all agents' actions in batch is only for purposes of computational parallelism and ease of explanation.

Also of note is how using the attentional architecture allows for the straightforward application of a simple and relatively well-understood \emph{single-agent} RL training algorithm (namely, PPO).
The question of how each agent needs to reason about all other agents when determining its own action is explicitly moved to the policy network.
The ability to deploy classic RL algorithms like PPO, as opposed to needing multi-agent-specific RL algorithms like QMIX \citep{rashid_qmix_2018} is noteworthy.

Since all agents use the same policy, we may think about each agent's state and action, and its view of the states of the other agents, as an individual training example for the single policy.
It seems that the only obstacle to a fully-decentralized training regime, where gradients can be computed locally, is the fact that to estimate the scalar reward, we need to centrally aggregate encoded information over agents in our value network by, e.g., our max-pooling.
However, since all agents share the same policy, we should be able to assume that any agent with knowledge of the others' observations can make an estimate not only of its own action, but also the others'.
This means that each agent can also produce a \emph{local value function estimate}, as a function of the subset of the agents that it can observe.
In other words, \emph{every term} in the global (vanilla) policy gradient 
\emph{and} a value function baseline can be locally estimated from each agent's perspective.

Future work should explore the extension of these points to move towards greater contextual transferability and decentralized learning in deep RL.

\subsubsection*{Acknowledgments}
This research was supported by the National Science Foundation under grant CPS-1545116 and Berkeley DeepDrive.
We also made use of the Savio computational cluster provided by the Berkeley Research Computing program at the University of California, Berkeley.
M. A. W. thanks Rowan McAllister for his reading and feedback.

\bibliography{library}
\bibliographystyle{abbrvnat}

\end{document}